\DeclareMathOperator*{\argmax}{arg\,max}
\DeclareMathOperator*{\argmin}{arg\,min}
\newtheorem{definition}{Definition}
\newtheorem{lemma}{Lemma}
\newtheorem{corollary}{Corollary}
\newtheorem{example}{Example}
\newtheorem{proposition}{Proposition}
\newcommand{\mc}[1]{\ensuremath{\mathcal{#1}}}
\newcommand{\mt}[1]{\ensuremath{\mathtt{#1}}}
\newcommand{\ti}[1]{\ensuremath{\textit{#1}}}
\newcommand{\labelFunc}{\ensuremath{\mt{L}}}
\newcommand{\UL}[1]{\underline{#1}}
\newcommand{\LA}{\ensuremath{\leftarrow}}
\newcommand{\clauseP}[2]{\ensuremath{\langle #1,#2 \rangle}}
\newcommand{\tree}{\ensuremath{\mathtt{T}}}
\newcommand{\dirt}{\ensuremath{\mathtt{D}}}
\newcommand{\ifaf}{iff}
\newcommand{\stt}{s.t.}
\newcommand{\wrt}{wrt.}
\newcommand{\respectively}{respectively}
\newcommand{\piP}{\omega}
\newcommand{\oF}{\pi}
\newcommand{\CC}{\mc{W}}
\newcommand{\K}{\ensuremath{\mc{K}}}
\newcommand{\pos}{\ensuremath{\mathtt{POS}}}
\begin{document}

\title{Explainable AI for Classification using Probabilistic Logic Inference}

\author{
Xiuyi Fan$^1$\and
Siyuan Liu$^1$\and
Thomas C. Henderson$^{2}$\\
\affiliations
$^1$Department of Computer Science, Swansea University, UK\\
$^2$School of Computing, University of Utah, USA\\
\emails
\{xiuyi.fan,siyuan.liu\}@swansea.ac.uk,
tch@cs.utah.edu
}

\maketitle

\begin{abstract}
The overarching goal of Explainable AI is to develop systems that
not only exhibit intelligent behaviours, but also are able to explain
their rationale and reveal insights. In explainable machine learning,
methods that produce a high level of prediction accuracy as well as
transparent explanations are valuable. In this work, we present an
explainable classification method. Our method works by first
constructing a symbolic Knowledge Base from the training data, and
then performing probabilistic inferences on such Knowledge Base with
linear programming. Our approach achieves a level of learning
performance comparable to that of traditional classifiers such as
random forests, support vector machines and neural networks. It
identifies decisive features that are responsible for a classification
as explanations and produces results similar to the ones found by
SHAP, a state of the art Shapley Value based method. Our algorithms
perform well on a range of synthetic and non-synthetic data sets.
\end{abstract}

\section{Introduction}

The need for building AI systems that are explainable has
been raised, see e.g., \cite{Doran17}. The ability to make
machine-led decision making transparent, explainable, and therefore
accountable is critical in building trustworthy systems. Producing
explanations is at the core of realising explainable
AI. Two main approaches for explainable machine learning have been
explored in the literature: (1) intrinsically interpretable methods
\cite{Rudin19}, in which prediction and explanation are both
produced by the same underlying mechanism, and (2) model-agnostic methods
\cite{molnar2019}, in which explanations are treated as a post hoc
exercise and are separated from the prediction model. In the case for
methods (1), while many intrinsically interpretable models, such as
short decision trees, linear regression, Naive Bayes, k-nearest
neighbours and decision rules \cite{Yang17} are easy
to understand, they can be weak for prediction and suffer from
performance loss in complex tasks. As for methods (2), model agnostic
approaches such as local surrogate \cite{Ribeiro16}, global surrogate
\cite{Alonso18}, feature importance \cite{Fisher18} and
symbolic Bayesian network transformation \cite{Shih18} leave the
prediction model intact and use interpretable but presumably weak
models to ``approximate'' the more sophisticated prediction
model. However, it has been argued that since model agnostic
approaches separate explanation from prediction, explanation
modules cannot be faithful representations of their prediction
counterpart \cite{Rudin19}. In this context, we present a
classification approach that produces accurate predictions and
explanations as well as supports domain knowledge incorporation.

Given a 
set of data instances, whose class membership is
known, classification is the problem of identifying to which of a set
of classes a new
instance belongs. Each instance is
characterised by a set of features $\mc{F}$. For some data $\mc{D}$, there
exists a labelling function $\labelFunc: \mc{D} \mapsto \{\pos, \neg
\pos\}$.\footnote{\pos{} stands for {\em positive}. For presentation
  simplicity, we only consider binary classification problems in this
  paper. Our approach generalises to multi-category classification by
  replacing \pos{} with class labels for each candidate class
  accordingly.} Let $D \subseteq \mc{D}$ be the training set \stt{}
for each $d \in D$, $\labelFunc(d)$ is known. For $x \in \mc{D}$, we
would like to know:

\begin{tabular}{rl}
{\bf Q1:} & {\em whether $\labelFunc(x) = \pos$;} \\
{\bf Q2:} & {\em if so, which features $f \subseteq \mc{F}$ make $\labelFunc(x) = \pos$.}
\end{tabular}

Standard supervised learning techniques answer Q1 but not Q2, which
asks for {\em decisive} features. Understanding {\em ``what causes a
  query instance $x$ to be classified as in some class $C$?''}
is as important as {\em ``does $x$ belong to $C$?''} For
instance, for a diagnostic system taking patients' medical records as
the input and producing disease classifications as the output,
pinpointing symptoms that lead to the diagnosis is as important as the
diagnosis itself.
In this paper, we propose algorithms answering both questions. In a
nutshell, we solve classification as inference on
probabilistic Knowledge Bases (KBs) learned from data. Specifically,
given training data $D$ with features $F$, we define a function
$\mc{M}$ that maps $D$ to a probabilistic KB. Then, for a query $x$,
we check whether $\mc{M}(D)$ and $x$ together entail $\pos$. Very
roughly, we take classification as evaluating
\begin{equation}
\mc{M}(D), x \models \pos. \label{eqn:mainIdea}
\end{equation}
In this way, computing explanations for $\labelFunc(x) = \pos$ in our
setting can be formulated as:
\begin{center}
{\em Given $\mc{M}(D),x \models \pos$, identify some $x' \subseteq
x$ \stt{} $\mc{M}(D),x' \models \pos$.}
\end{center}

We present two algorithms for probabilistic KB construction. The first
one constructs KBs from decision trees and the second constructs KBs
directly from data. Query classification is modelled with probabilistic
logic inference carried out with linear programming.
The main contributions are:
(i) a method of performing classification with probabilistic logic
inference; (ii) a polynomial time inference algorithm on
KBs; and (iii)
algorithms for identifying decisive features as explanations
and incorporating domain knowledge in classification and explanation.


\section{Training as Knowledge Base Construction}
\label{sec:KB}

KB construction is at the core of our approach. Specifically, a KB
contains a set of disjunction clauses and each clause has a
probability, defined formally as follows.

\begin{definition}
\label{dfn:KB}

A {\em Knowledge Base (KB)} $\{\clauseP{p_1}{c_1}, \ldots,$
$\clauseP{p_m}{c_m}\}$ is a set of pairs of clauses $c_i$
and probability of clauses $p_i = P(c_i)$, $1 \leq i \leq m$. Each
clause is a disjunction of literals and each literal is a
propositional variable or its negation.
\end{definition}
\begin{example}
  \label{exp:KB}
  With two propositional variables $\alpha$ and $\beta$,
  $\{\langle 0.6, \neg \alpha \vee \beta\rangle, \langle 0.8, \alpha
  \rangle\}$ is a simple KB containing two clauses with probabilities
  0.6 and 0.8, \respectively.
\end{example}

Generating logic clauses from data has been studied in the
literature, see e.g., \cite{Chiang01,Quinlan87} for extracting rules
from decision trees, and more recently, \cite{Mashayekhi17} for
extracting rules from random forests. Unlike these approaches where,
due to their use of strict inference methods, non-probabilistic rules
are generated, our KBs consist of probabilistic rules. Specifically,
from a decision tree constructed from the training data, we create a
clause $c$ from each path from the root to the leaf of the tree. The
probability of $c$ is the ratio between the positive samples and all
samples at the leaf. Formally, we define the KB $\K_\tree$ drawn from
a decision tree $\tree$ as follows.


\begin{definition}
\label{dfn:KB_from_tree}
  Let $\tree$ be a decision tree,
  each non-root node in $\tree$ labelled by a feature-value pair
  $a\_v$, read as feature $a$ having value $v$. Let $\{\rho_1, \ldots,
  \rho_k\}$ be the set of root-to-leaf paths in $\tree$, where each
  $\rho_i$ is of the form $\langle root, a_1\_v_1, \ldots, a_n\_v_m
  \rangle$ and $a_n\_v_m$ labels a leaf node in $\tree$. Then, the {\em KB
    drawn from $\tree$} is $\K_{\tree} = \{\clauseP{p_1}{c_1}, \ldots,
  \clauseP{p_k}{c_k}\}$ \stt{} for each $\rho_i$,
  $\clauseP{p_i}{c_i} \in \K_{\tree}$, where
  $c_i = \pos \vee \neg a_1\_v_1 \vee \ldots
  \vee \neg a_n\_v_m$, and
  $p_i$ is the ratio between positive and the
  total samples in the node labelled by $a_n\_v_m$.
\end{definition}

Algorithms~\ref{alg:path_to_clause} and \ref{alg:KB_from_tree}
construct $\K_{\tree}$ from data $D$. Specifically,
Algorithm~\ref{alg:path_to_clause} takes a root-to-leaf path from a
decision tree to generate a clause. The path with features
$a_1, \ldots, a_n$, \stt{} each feature has a value in $\{v_1, \ldots,
v_m\}$, is interpreted as
$a_1\_v_1 \wedge \ldots \wedge a_n\_v_m \rightarrow \pos,$
and read as, {\em a sample is positive if its feature $a_1$ has value
$v_1$, \ldots, feature $a_n$ has value $v_m$}. As a disjunction,
the clause is then written as
$\pos \vee \neg a_1\_v_1 \vee \ldots \vee \neg a_n\_v_m.$
Algorithm~\ref{alg:KB_from_tree} builds a tree and then constructs
clauses from paths in the tree.
Example 2 illustrates how to build a KB from a decision tree.

\begin{algorithm}[H]
\begin{small}
\caption{Clause from Tree Path}\label{alg:path_to_clause}
\begin{algorithmic}[1]
\Procedure{ClauseFromPath}{$\ti{path}$}
\State $\textit{clause} \LA$ \pos
\For {each edge \textit{e} in \textit{path}}
\State $a \LA$ feature of $e$
\State $v \LA$ value of $e$
\State $\textit{clause} \LA \textit{clause} \vee \neg a\_v$
\EndFor
\State \Return \textit{clause}
\EndProcedure
\end{algorithmic}
\end{small}
\end{algorithm}

\begin{algorithm}[H]
\begin{small}
\caption{Construct KB with Decision Tree}\label{alg:KB_from_tree}
\begin{algorithmic}[1]
\Procedure{DecsionTreeKB}{$\ti{D}$}
\State $\K_\tree \LA \{\}$; Use ID3 to compute a tree $\tree$ from
$\ti{D}$
\State $\ti{allPaths} \LA \text{all paths from the root to leaves
in } T$ \label{line:root_to_leaves}
\For {each \ti{path} in \ti{allPaths}}
\State $n \LA$ end node in \ti{path}
\State $r \LA$ ratio between positive and total samples in $n$
\State add $[r]$ \Call{ClauseFromPath}{\ti{path}} to $\K_\tree$
\EndFor
\State \Return $\K_\tree$
\EndProcedure
\end{algorithmic}
\end{small}
\end{algorithm}



\begin{example}
\label{exp:psatDT}

Given a data set with four strings, {\em 0000, 1111,
1010, 1100}, labelled positive, and four strings, {\em 0010, 0100, 1110,
1000}, labelled negative. There are four features, bits 1-4, each feature takes its value
from $\{0,1\}$. The decision tree constructed
is shown in Figure~\ref{fig:DT}. There are eight leaves, thus eight
root-to-leaf paths and clauses. E.g.,
{\ti root} $\rightarrow$ $a_4\_0$ $\rightarrow$ $a_1\_0$ $\rightarrow$
$a_2\_0 \rightarrow a_3\_0$ gives the clause
$\pos{} \vee \neg a_4\_0 \vee \neg a_1\_0 \vee \neg a_2\_0 \vee \neg
a_3\_0$. The probability of the clause is the number
of positive samples over the total samples at the leaf. There is
only one sample, 0000, at this leaf, since it is positive, the
clause probability is 1. The KB $\K_\tree$ is shown in
Table~\ref{tb:kb}.\footnote{Henceforth, $[p]$ $z_1 \vee \ldots \vee
  z_l$ denotes an $l$-literal clause in a KB with probability $p$.}

\begin{figure}[!h]
\vspace{-15pt}
  \begin{footnotesize}
\[
\xymatrix@C=10pt@R=6pt{
    &                  &                    &                    & root\ar[dl]\ar[dr]\\
    &                  &                    & a_4\_0\ar[dl]\ar[dr] &                    & a_4\_1\\
    &                  & a_1\_0\ar[dl]\ar[d]  &                    & a_1\_1\ar[d]\ar[dr] \\
    &a_2\_0\ar[dl]\ar[d] & a_2\_1               &                    & a_2\_0\ar[dl]\ar[d]   & a_2\_1\ar[d]\ar[dr] \\
a_3\_0&a_3\_1              &                    & a_3\_0               & a_3\_1                & a_3\_0                & a_3\_1 \\
}
\]
  \end{footnotesize}
\vspace{-15pt}
\caption{
 Decision tree learned from data in
 Example~\ref{exp:psatDT}.
 A node $a_X\_Y$ is read as ``bit $X$ has
 value $Y$''. \label{fig:DT}}
\end{figure}
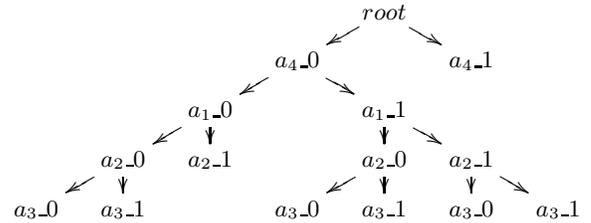

\begin{table}[!h]
\vspace{-15pt}
  \caption{$\K_\tree$ from the tree in
    Figure~\ref{fig:DT}. \label{tb:kb}}
  \begin{small}
\centerline{
\begin{tabular}{ll}
$[{\rm 0.0}]$ & \pos{} $\vee \neg a_1\_0 \vee \neg a_2\_0 \vee \neg a_3\_1 \vee \neg a_4\_0$ \\
$[{\rm 1.0}]$ & \pos{} $\vee \neg a_1\_0 \vee \neg a_2\_0 \vee \neg a_3\_0 \vee \neg a_4\_0$ \\
$[{\rm 0.0}]$ & \pos{} $\vee \neg a_1\_0 \vee \neg a_2\_1 \vee \neg a_4\_0$ \\
$[{\rm 1.0}]$ & \pos{} $\vee \neg a_1\_1 \vee \neg a_2\_0 \vee \neg a_3\_1 \vee \neg a_4\_0$ \\
$[{\rm 0.0}]$ & \pos{} $\vee \neg a_1\_1 \vee \neg a_2\_0 \vee \neg a_3\_0 \vee \neg a_4\_0$ \\
$[{\rm 0.0}]$ & \pos{} $\vee \neg a_1\_1 \vee \neg a_2\_1 \vee \neg a_3\_1 \vee \neg a_4\_0$ \\
$[{\rm 1.0}]$ & \pos{} $\vee \neg a_1\_1 \vee \neg a_2\_1 \vee \neg a_3\_0 \vee \neg a_4\_0$ \\
$[{\rm 1.0}]$ & \pos{} $\vee \neg a_4\_1$ \\
\end{tabular}
}
  \end{small}
  \vspace{-10pt}
\end{table}

\end{example}

Algorithm~\ref{alg:KB_from_tree} constructs clauses from
root-to-leaf paths in a decision tree. We can also use
paths from the root to all nodes, not just the leaves, to construct
clauses, i.e., replacing line~\ref{line:root_to_leaves} in
Algorithm~\ref{alg:KB_from_tree} with\\
\centerline{
$\textit{allPaths} \LA \text{all paths from the root to \textbf{all nodes} in } T$.
}
As random forests have been introduced
to improve the stability of decision trees, we can apply the same idea
to obtain more clauses from a forest,
i.e., repeatedly generated different decision trees, and for each tree,
we construct clauses for each path originated at its root, in the
spirit of \cite{Mashayekhi17}.
If we further take the above idea of ``generating as many clauses as
possible'' to its limit, we realise that constructing KBs from trees
is a special case of selecting clauses constructed from all
$k$-combinations of feature-value pairs, for $k = 1\ldots n$, where
$n$ is the total number of features in the
data. Formally, we define the KB $\K_\dirt$ drawn directly from
data $\dirt$ as follows.

\begin{definition}
\label{dfn:kd}
Given data $D$ with features $F = \{a_1, \ldots, a_n\}$ taking values
from $V = \{v_1, \ldots, v_m\}$, for each $F_k = \{a'_1, \ldots,
a'_k\} \in 2^F \setminus \{\}$, let $C_k^1 = \{a'_1\_v | v \in V\},
\ldots, C_k^k = \{a'_k\_v | v \in V\}$. $C_k = C_k^1 \times \ldots
\times C_k^k$. For each $c =
\{a''_{1}\_v'_{1}, \ldots, a''_{k}\_v'_{k}\} \in C_k$, $S_i \subseteq D$
is the set of samples \stt{} feature $a''_{i}$ having value $v'_{i'}$
for all $i \in \{1,\ldots,k\}$. If $|S_i| \neq 0$, then
let $p_i$ be the ratio between positive samples in $S_i$ and $|S_i|$,
$\clauseP{p_i}{\pos \vee \neg a''_{1}\_v'_{1} \vee \ldots \vee \neg
  a''_{k}\_v'_{k}}$ is in the KB $\K_{\dirt}$ {\em drawn directly from
  data}. There is no other clause in $\K_{\dirt}$ except those constructed
as above.
\end{definition}
Definition~\ref{dfn:kd} can be illustrated with the following example.
\begin{example}
Let $F = \{a_1, a_2\}$ and $V=\{0,1\}$. Then $2^F \setminus \{\} =
\{\{a_1\}, \{a_2\}, \{a_1, a_2\}\}$. For illustration, let us choose
$F_k = \{a_1,a_2\}$. Then $C_k^1 = \{a_1\_0, a_1\_1\}$, $C_k^2 =
\{a_2\_0, a_2\_1\}$, and $C_k = \{\{a_1\_0, a_2\_0\}, \{a_1\_0,
a_2\_1\},$ $\{a_1\_1, a_2\_0\}, \{a_1\_1, a_2\_1\}\}$. Then, suppose
we choose $c = \{a_1\_0, a_2\_0\}$ and add $\clauseP{p_i}{\pos \vee
  \neg a_1\_0 \vee \neg a_2\_0}$ to $\K_\dirt$, where $p_i$ is the
ratio between positive samples with both features $a_1,a_2$ having
value 0 and total samples with these feature-values. $\K_\dirt$ can be
constructed by choosing different $F_k$ and $c$ iteratively.
\end{example}

\begin{algorithm}
  \caption{Construct KB Directly}\label{alg:KB_direct}
\begin{small}
\begin{algorithmic}[1]
\Procedure{DirectKB}{$\ti{data}$}
\State $\ti{counts} \LA \{\}, \K_\dirt \LA \{\}$ \label{line:ct}
\For {each \ti{entry} in \ti{data}}
  \State $\ti{feaVals} \LA \{a\_v |$ feature $a$ has value $v$ in
  $\ti{entry}\}$
  \State \ti{label} \LA binary label of \ti{entry} as integer \label{line:label}
  \State $S \LA$ \Call{Powerset}{\ti{feaVals}} $\setminus \{\}$ \label{line:S}
  \For {each \ti{key} as an element of $S$}
    \If{\ti{key} is in \ti{counts}}
      \State $\ti{counts}[\ti{key}] \LA \ti{counts}[\ti{key}] +
  [1, \ti{label}]$ \label{line:label1}
    \Else
      \State $\ti{counts}[\ti{key}] \LA
  [1, \ti{label}]$ \label{line:label2}
    \EndIf
  \EndFor
\EndFor
\For {each \ti{key} in \ti{counts}}
  \State $r \LA \ti{counts}[\ti{key}][1] / \ti{counts}[\ti{key}][0]$
  \State Insert ``$[r]$ \ $\pos \vee \neg key$''
  to $\K_\dirt$ \label{line:negKey}
\EndFor
\State \Return $\K_\dirt$
\EndProcedure
\end{algorithmic}
\end{small}
\end{algorithm}

Algorithm~\ref{alg:KB_direct} gives a procedural construction for
$\K_{\dirt}$.\footnote{In Line~\ref{line:negKey}, $\neg \{s_1, \ldots,
  s_n\}$ is $\neg s_1 \vee \ldots \vee \neg s_n$, e.g. for
  $\ti{key}=\{a_1\_v_1,a_2\_v_2\}$, insert ``$[p]$ $\pos \vee \neg
  a_1\_v_1 \vee \neg a_2\_v_2$'' to $\K_\dirt$.
$\ti{counts}$ is a dictionary with keys being sets of
feature-value pairs and values being two-element arrays. \ti{label}
is either 0 or 1. Line~\ref{line:label1} is an element-wise
addition, e.g., [1,0]+[1,1]=[2,1]. At the end of the first loop,
$\ti{counts}[key][0]$ is the number of samples containing \ti{key} and
$\ti{counts}[key][1]$ is the number of positive ones.}
The following propositions describe the relation between the two KB
construction approaches. Proposition~\ref{prop:subset} and
\ref{prop:pathInTree} sanction that
all clauses extracted from decision trees can be constructed directly
in $\K_\dirt$ and all
clauses built in $\K_\dirt$ can be extracted from some trees, \respectively.

\begin{proposition}
\label{prop:subset}
Given a data set $D$,
$\K_\tree \subseteq \K_\dirt$.
\end{proposition}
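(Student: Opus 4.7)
The plan is to take an arbitrary clause in $\K_\tree$ and exhibit a choice of $F_k$ and $c$ in Definition~\ref{dfn:kd} that produces the same pair (same disjunction and same probability) in $\K_\dirt$. Fix $\clauseP{p_i}{c_i} \in \K_\tree$. By Definition~\ref{dfn:KB_from_tree}, there is a root-to-leaf path $\rho_i = \langle root, a_1\_v_1, \ldots, a_n\_v_m\rangle$ with $c_i = \pos \vee \neg a_1\_v_1 \vee \ldots \vee \neg a_n\_v_m$, and $p_i$ equal to the ratio of positive to total samples at the leaf labelled $a_n\_v_m$.

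Next, I would take $F_k = \{a_1, \ldots, a_n\}$ (the features along $\rho_i$, which in a standard ID3 tree are distinct, so $k=n$) and pick $c = \{a_1\_v_1, \ldots, a_n\_v_m\} \in C_k$. With this choice, the clause produced by Definition~\ref{dfn:kd} is literally $\pos \vee \neg a_1\_v_1 \vee \ldots \vee \neg a_n\_v_m$, matching $c_i$ syntactically. It remains to check that the probability coefficient agrees.

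The core observation is that a decision tree partitions the training data: a sample $d \in D$ reaches the leaf at the end of $\rho_i$ \ifaf{} $d$ satisfies every feature-value constraint $a_j\_v_j$ along the path. Therefore the set $S$ defined in Definition~\ref{dfn:kd} for this choice of $c$ is exactly the set of samples at the leaf. Consequently $|S| \neq 0$ (the leaf is non-empty, as it was produced from actual data), and the ratio of positive samples in $S$ to $|S|$ equals $p_i$ by definition. Hence the direct-KB construction adds precisely $\clauseP{p_i}{c_i}$, so $\clauseP{p_i}{c_i} \in \K_\dirt$. Since $\clauseP{p_i}{c_i}$ was arbitrary, $\K_\tree \subseteq \K_\dirt$.

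The only genuine obstacle is justifying that the features along a root-to-leaf path are distinct, so that $\{a_1,\ldots,a_n\}$ truly has size $n$ and is an element of $2^F \setminus \{\}$. This follows from the standard ID3 splitting rule used in Algorithm~\ref{alg:KB_from_tree}, which never re-tests a feature already queried on the path; if a non-standard variant allowed re-splitting, one would collapse repeated literals before appealing to Definition~\ref{dfn:kd}. Everything else is bookkeeping between the two definitions.
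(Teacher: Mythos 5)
Your proposal is correct and follows essentially the same route as the paper's (much terser) sketch: every set of feature--value pairs along a root-to-leaf path is among the combinations enumerated by the direct construction, so the corresponding clause appears in $\K_\dirt$. You additionally verify that the probability coefficients coincide (since the samples reaching a leaf are exactly those satisfying all feature--value constraints on its path), a point the paper's sketch glosses over, so your write-up is a strictly more careful version of the same argument.
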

\begin{proof}
(Sketch.) $S$ constructed in Line~\ref{line:S},
Algorithm~\ref{alg:KB_direct} is the powerset of all possible
feature-value pairs in $D$ and a path in a decision tree
represents some feature-value pairs in $D$. Thus, any clause
produced by a tree is produced by Algorithm~\ref{alg:KB_direct}.
\end{proof}

\begin{proposition}
\label{prop:pathInTree}
Given a data set $D$, for each clause $c
\in \K_\dirt$, there exists a decision tree $\tree$ constructed from
$D$ \stt{} there is a path $p$ in $\tree$ and the clause drawn from
$p$ is $c$.
\end{proposition}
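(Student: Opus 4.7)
The plan is, given any $c \in \K_\dirt$, to exhibit a decision tree constructed from $D$ together with a path whose drawn clause is exactly $c$. By Definition~\ref{dfn:kd}, such a $c$ has the form $\pos \vee \neg a'_1\_v'_1 \vee \ldots \vee \neg a'_k\_v'_k$ for some $\{a'_1, \ldots, a'_k\} \subseteq F$ and values $v'_i$, with probability $p$ equal to the positive-to-total ratio over the non-empty set $S \subseteq D$ of records satisfying $a'_i = v'_i$ for all $i$ (non-emptiness is guaranteed by the $|S_i| \neq 0$ clause in Definition~\ref{dfn:kd}). I would then build $\tree$ greedily but freely: the root splits on $a'_1$, the child reached via the edge labelled $v'_1$ splits on $a'_2$, and so on, so that the depth-$(i-1)$ node along the intended path splits on $a'_i$ for $i = 1, \ldots, k$; beyond depth $k$, the tree is completed by splitting on remaining features arbitrarily until leaves are reached. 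Since the splitting feature at each internal node is a free design parameter, this is a valid decision tree over $D$, even if a heuristic such as ID3 would have chosen a different split.

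Following the edges labelled $v'_1, \ldots, v'_k$ from the root then yields a path $\rho$ whose end node is labelled $a'_k\_v'_k$ and whose set of reaching samples is exactly $S$. Applying Algorithm~\ref{alg:path_to_clause} to $\rho$ produces $\pos \vee \neg a'_1\_v'_1 \vee \ldots \vee \neg a'_k\_v'_k$, matching the literals of $c$, while the positive-to-total ratio at the end node equals $p$ by the definition of $S$. Thus the clause drawn from $\rho$ coincides with $c$ in both its logical form and its probability label.

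The main obstacle, and really the only subtlety, is that Definition~\ref{dfn:KB_from_tree} formally draws clauses only from root-to-leaf paths, whereas for $k$ strictly less than the total number of features the end node of $\rho$ is internal rather than a leaf. This is resolved by appealing to the generalisation noted immediately after Algorithm~\ref{alg:KB_from_tree}, in which $\K_\tree$ is built from paths to all nodes in $\tree$, not just the leaves. Under this reading, $\rho$ contributes the clause $c$ to $\K_\tree$, completing the argument. Alternatively, one could insist on a leaf-only construction by choosing $\tree$ so that the node reached by $\rho$ happens to be pure, but such a case analysis is avoided by the all-nodes generalisation.
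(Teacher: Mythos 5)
Your proposal is correct and follows essentially the same route as the paper's own (very terse) proof sketch: observe that a clause in $\K_\dirt$ mentions each feature at most once, so one can freely build a tree whose first $k$ splits realise the path $root \rightarrow a'_1\_v'_1 \rightarrow \ldots \rightarrow a'_k\_v'_k$, and the sample set reaching that node is exactly $S$, so the probability matches. You are in fact more careful than the paper, which silently ignores the leaf-versus-internal-node issue that you resolve via the ``all nodes'' generalisation of Algorithm~\ref{alg:KB_from_tree}.
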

\begin{proof}
(Sketch.) All clauses in $\K_\dirt$ are of the form $\pos \vee \neg
a_1\_v_1 \vee \ldots \vee \neg a_n\_v_m$ where $a_i\_v_j$ are
feature-value pairs and for any $a_i, a_j \in \{a_1, \ldots, a_n\}$,
if $i \neq j$, then $a_i \neq a_j$. Thus, one can construct a tree
$\tree$ containing the path \ti{root} \!\! $\rightarrow$ \!\! $a_1\_v_1$
\!\! $\rightarrow$ $\ldots$ $\rightarrow$ \!\! $a_n\_v_m$.
\end{proof}

\section{Querying as Probabilistic Inference}
\label{sec:inf}

Our KB construction methods produce clauses with
probabilities. Intuitively, for a query that asserting some
feature-value pairs, we want to compute the probability of $\pos$ under
these feature-value pairs and predicting the query being positive when
the probability is greater than 0.5. To
introduce our inference method for computing such probabilities, we
first review a few concepts in probabilistic logic \cite{Nilsson86},
which pave the way for discussion.

Given a KB $\K$\footnote{From this point on, we use $\K$
to denote a KB constructed using either of the two approaches
($\K_\tree$ or $\K_\dirt$).} with clauses $c_1, \ldots, c_m$ composed
from $n$ propositional variables, the {\em complete conjunction set},
as
\CC, over $\K$ is the set of $2^n$ conjunctions \stt{} each
conjunction contains $n$ distinct propositional variables.
A {\em probability distribution} $\oF$ (\wrt{} \K) is the set of $2^n$
probabilities $\oF(w) \geq 0, (w \in \CC)$ \stt{}
$\sum_{w\in\CC}{\oF(w)}=1$. $\oF$ {\em satisfies} \K{} \ifaf{} for
each $i=1,\ldots,m$, the sum of $\oF(w)$ equals $P(c_i)$ for all $w$
\stt{} the truth assignment satisfying $w$ satisfies $c_i$. A KB \K{}
is {\em consistent} \ifaf{} there exists a $\oF$
satisfying \K.

With a consistent KB, Nilsson suggested that one can derive {\em
literal probabilities} from $\oF$, i.e., for all
literals $z$ in the KB, $P(z)$ is the sum of $\oF(w)$ for all
$w \in \CC$ containing $z$, e.g., for a consistent KB with two
literals $\alpha$ and $\beta$, $P(\alpha) = P(\alpha \wedge \beta) +
P(\alpha \wedge \neg \beta)$ \cite{Nilsson86}. In short, to compute
literal probabilities, one first computes probability assignments over
the complete conjunction set, and then adds up all relevant
probabilities for the literal.

At first glance, since $\pos$ is an literal in our knowledge
base, it might be possible to perform our inference with the above
approach for computing $P(\pos)$: all clauses in a KB are of
the form $\pos \vee \neg a_1\_v_1 \vee \ldots \vee \neg a_n\_v_m,$
each with an associated probability; a query is a set of
feature-value pairs, e.g., $a_1'\_v_1', \ldots, a_n'\_v_m'$, each with
an assigned probability 1; $P(\pos)$ computed as the sum of $P(\pos
\wedge a_1\_v_1 \wedge \ldots \wedge a_n\_v_m)$, $P(\pos
\wedge a_1\_v_1 \wedge \ldots \wedge \neg a_n\_v_m)$, \ldots, $P(\pos
\wedge \neg a_1\_v_1 \wedge \ldots \wedge \neg a_n\_v_m)$ estimates
the likelihood of $\pos$.
However, this idea fails for the following two reasons. Firstly, this approach requires solving the probability distribution
$\oF$, which has been shown to be NP-hard \wrt{} the number of
literals in the KB\cite{Georgakopoulos1988}, thus the
state-of-the-art approaches only work for KB
with a few hundred of variables \cite{Finger11}.

Secondly, putting a KB and a query together introduces
inconsistency, so there is no solution for $\oF$. For
instance, for the KB in Example~\ref{exp:psatDT}, let the query be
0000, which translates to  four clauses, $a_1\_0, a_2\_0, a_3\_0$ and
$a_4\_0$, each with $P(a_i\_0)=1$. Consequently, $P(\neg
a_i\_0)=0$. Together with $P(\pos{} \vee \neg a_1\_0 \vee \neg
a_2\_0 \vee \neg a_3\_0 \vee \neg a_4\_0)=1$, we infer
$P(\pos)=1$. However, $P(\pos)=1$ is inconsistent with
$P(\pos{} \vee \neg a_1\_0 \vee \neg a_2\_0 \vee \neg a_3\_1 \vee \neg
a_4\_0) = 0$, as for any $\alpha, \beta$, we must have $P(\alpha) \leq
P(\alpha \vee \beta)$. In this case, \K{} is inconsistent with
the query thus there is no solution for $\oF$.

One might suspect the inconsistency illustrate above is an artefact of
our KB construction, i.e., there could exist ways to construct KB
\stt{} consistency can be ensured. Although this might be
the case, there is no such existing method as far as we know and
when we incorporate domain knowledge later in this paper, it
becomes clear that being able to tolerate inconsistency is useful.

Since the source of the complexity is in the computation of the
probability distribution over the complete conjunction set, we avoid
computing it explicitly and introduce an efficient algorithm for
estimating literal probabilities without computing $\oF$. We formulate
the computation as an optimization problem so that inconsistency is
tolerated. This is the core of our inference method.

\begin{definition}
\label{dfn:lp}

  Given a KB $\K{} =
  \{\clauseP{p_1}{c_1},\ldots,\clauseP{p_m}{c_m}\}$ with
  clauses $\mc{C} = \{c_1, \ldots, c_m\}$
  over literals $\mc{Z}$, a {\em linear program} $L_{\K}$ of \K{}
  with unknowns $\piP(\sigma), \sigma \in \mc{C} \cup \mc{Z}$, is
  the following.\\
  {\rm minimise:}
  \begin{equation}
     \sum_{i = 1}^{m}|\piP(c_i) - p_i|  \label{eqn:obj}
  \end{equation}

  {\rm subject to: for each clause} $c_i = z_1 \vee \ldots \vee z_l$,
\begin{equation}
\piP(c_i) \leq \piP(z_1) + \ldots + \piP(z_l)\label{eqn:bool};
\end{equation}
\hspace{52pt} {\rm for} $z_j = z_1 \ldots z_l$ {\rm in clause} $c_i$:
\begin{align}
\piP(c_i) &\geq \piP(z_{j}); \label{eqn:mono}\\
1 &= \piP(z_{j}) + \piP(\neg z_{j}); \label{eqn:one}\\
0 &\leq\piP(z_{j}) \leq 1. \label{eqn:gt0a}
\end{align}
\end{definition}

Definition~\ref{dfn:lp} estimates literal probabilities
from clause probabilities without computing the
distribution over the complete conjunction set, i.e., for
any literal $z$ in the KB, $\piP(z)$ approximates $P(z)$.
The intuition is as follows.
\begin{itemize}[noitemsep]
\item
Constraints given by Eqn.~(\ref{eqn:bool}-\ref{eqn:gt0a}) are
probability laws, i.e., Eqn.~(\ref{eqn:bool}) is the Boole's
inequality \cite{casella2002}, (\ref{eqn:mono}) is monotonicity;
(\ref{eqn:one}) and (\ref{eqn:gt0a}) define the bound.
\item
The optimisation function Eqn.~(\ref{eqn:obj}) is used to tolerate
inconsistency, i.e., for a KB containing inconsistent clauses, \stt{}
some of the constraints cannot be met, we allow clause probabilities
to be relaxed by not forcing $\piP(c_i) = P(c_i)$ as constraints. We
still want the estimated clause probabilities ($\piP(c_i)$) to be as
close to their specified values ($P(c_i)$) as possible, so
Eqn.~(\ref{eqn:obj}) minimises their difference. A linear
difference is chosen to ensure a low computational complexity.
\end{itemize}
Note that, for all literals in a clause, their estimated probabilities
are constrained by inequalities local to the clause (e.g.,
$\piP(c_i) \leq \piP(z_1) + \ldots + \piP(z_l)$). We avoid the
exponential growth of constraints, which causes the NP
computational difficulties, by forgoing not only explicit probability
computation for the complete conjunction set but also global
constraints on estimated clause probabilities, e.g., for two clauses
$c_1 = \alpha \vee \beta$ and $c_2 = \alpha \vee \beta \vee \gamma$,
we do not enforce $\piP(c_1) \leq \piP(c_2)$.
We illustrate probability computation with the
following example.


\begin{example}
\label{exp:psat}
(Example~\ref{exp:KB} cont.)
Given these two clauses,
$c_1 = \neg \alpha \vee \beta; c_2 = \alpha$, and their probabilities,
$P(c_1)= 0.6, P(c_2)=0.8$,
the complete
conjunction set
$\CC = \{\neg \alpha \wedge \neg \beta, \neg
\alpha \wedge \beta, \alpha \wedge \neg \beta, \alpha \wedge
\beta\}$. Truth assignments satisfying
$\alpha \wedge \beta, \neg \alpha \wedge \beta$, and
$\neg \alpha \wedge \neg \beta$ satisfy
 $c_1$ and truth assignments satisfying $\alpha \wedge \beta$ and
 $\alpha \wedge \neg \beta$ satisfy $c_2$.
\K{} is consistent \ifaf{}
$\oF_1 = \oF(\alpha \wedge \beta)$,
$\oF_2 = \oF(\alpha \wedge \neg \beta)$,
$\oF_3 = \oF(\neg \alpha \wedge \beta)$, and
$\oF_4 = \oF(\neg \alpha \wedge \neg \beta)$ \stt{}
$\sum_{j=1}^{4}{\oF_j}=1$, $\oF_1 + \oF_3 + \oF_4 = 0.6$ and
$\oF_1 + \oF_2 = 0.8$. $L_\K$ is:

{\rm minimise:} 

\centerline{
$|\piP(c_1)-0.6| + |\piP(c_2)-0.8|$
}

{\rm subject to:}

\begin{tabular}{cccc}
\multicolumn{2}{l}
{$\piP(c_1) \leq \piP(\neg \alpha) + \piP(\beta);$} &
\multicolumn{2}{l}
{$\piP(c_2) \leq \piP(\alpha);$} \\
\multicolumn{4}{l}
{$\piP(c_1) \geq \piP(\neg \alpha);$ \hspace{12pt}
$\piP(c_1) \geq \piP(\beta);$ \hspace{12pt}
$\piP(c_2) \geq \piP(\alpha);$} \\
\multicolumn{2}{l}
{$1 = \piP(\alpha) + \piP(\neg \alpha);$}&
\multicolumn{2}{l}
{$1 = \piP(\beta) + \piP(\neg \beta);$}\\
\multicolumn{2}{l}
{$0 \leq \piP(\alpha) \leq 1;$}& 
\multicolumn{2}{l}
{$0 \leq \piP(\beta) \leq 1.$}
\end{tabular}

\noindent
A solution to $L_\K$ is:
$\piP(\neg \alpha \vee \beta) = 0.6;$
$\piP(\alpha) = 0.8;$
$\piP(\neg\alpha) = 0.2;$
$\piP(\beta) = 0.6;$
$\piP(\neg\beta) = 0.4.$

It is easy to see that \K{} is consistent, and for all literals $z$ in
\K, $\piP(z)$ is a probability assignment for $z$.
Definition~\ref{dfn:lp} gives a means of performing probabilistic
inference, as this Example can be seen as modus ponens, i.e., from
$(\alpha \rightarrow \beta, \alpha) \vdash \beta$ where
$P(\alpha \rightarrow \beta)=0.6$, $P(\alpha)=0.8$, we infer
$\piP(\beta) = 0.6$.
\end{example}

In general, for a literal $z$ in a KB $\K$, it may be the case that no
$\oF{}$ exists such that $\piP(z)$ equals the probability computed
from $\oF$. E.g., consider:
\begin{example}
  Let $\K = \{c_1 \ldots c_4\}$, in which

\centerline{
\setlength{\tabcolsep}{2pt}
\begin{tabular}{llll}
$c_1$ is $[{\rm 1.0}]$ & $\alpha \vee \beta;$ \hspace{40pt}&
$c_2$ is $[{\rm 1.0}]$ & $\alpha \vee \gamma;$ \\
$c_3$ is $[{\rm 1.0}]$ & $\beta \vee \gamma;$ \hspace{40pt} &
$c_4$ is $[{\rm 1.0}]$ & $\alpha \vee \beta \vee \gamma.$
\end{tabular}
}

\noindent
Then, $\piP(c_i) = 1$,
$\piP(\alpha) = \piP(\beta) = \piP(\gamma) = 0.5$ is a solution to
$L_\K$ where the objective function attains 0. However,
$\piP(z) \neq P(z)$, for $z = \alpha, \beta, \gamma$.\footnote{This
shows that $L_\K$ has a feasible region larger than
the solution space of $\oF$. However, since linear programming
algorithms look for solutions at the boundary of variables, we do not
see such solutions in practice. Indeed, the Gurobi solver finds
$\piP(z) = P(z) = 1$, for $z = \alpha, \beta, \gamma$, which {\em are}
in the solutions computed with Nilsson's method.}
\end{example}

Relations between literal probability found via computing
exact solutions from the distribution over the complete conjunction
set and solutions found in $L_\K$ are as follows.

\begin{lemma}
\label{lemma:linearSystem}

If $\K$ is consistent, then solutions for
all $\piP(z)$ $z \in \mc{Z}$ exist \stt{} Eqn. (\ref{eqn:obj})
minimises to 0.
\end{lemma}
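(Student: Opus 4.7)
The plan is to use the consistency hypothesis directly: a consistent $\K$ provides a probability distribution $\oF$ over the complete conjunction set $\CC$, and from this $\oF$ I will read off a feasible point of $L_\K$ that attains the objective value $0$.

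First, I would invoke consistency to obtain a distribution $\oF$ satisfying $\K$, so that for every clause $c_i$, $\sum_{w\in\CC,\, w\models c_i} \oF(w) = p_i$. Following Nilsson's construction, I define, for each literal $z \in \mc{Z}$, the candidate value $\piP(z) := \sum_{w\in\CC,\, z\in w} \oF(w)$, and for each clause $c_i$ the value $\piP(c_i) := \sum_{w\in\CC,\, w\models c_i} \oF(w) = p_i$. The remaining task is to check that this assignment is feasible for $L_\K$ and that it drives the objective to $0$.

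Next I would verify the four families of constraints in turn. Eqn.~(\ref{eqn:gt0a}) and Eqn.~(\ref{eqn:one}) follow from $\oF \geq 0$, $\sum_w \oF(w) = 1$, and the fact that exactly one of $z_j, \neg z_j$ appears in each conjunction $w$. Eqn.~(\ref{eqn:mono}) follows from monotonicity of the underlying measure: any $w$ containing $z_j$ satisfies the disjunction $c_i$, so the set over which we sum for $\piP(z_j)$ is contained in the set for $\piP(c_i)$. Eqn.~(\ref{eqn:bool}) is just the finite union bound: the set of $w$ satisfying $c_i = z_1 \vee \ldots \vee z_l$ is the union of the sets where each $z_k$ holds, so $\piP(c_i) \leq \sum_k \piP(z_k)$. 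Finally, since $\piP(c_i) = p_i$ for every $i$, each summand in Eqn.~(\ref{eqn:obj}) is $0$, so the objective equals $0$; and since the objective is nonnegative, this is a minimum.

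I do not anticipate a serious obstacle: the only subtlety is keeping the bookkeeping straight between distributions over $\CC$ and the marginal probabilities attached to literals and clauses, and recognising that the LP constraints (\ref{eqn:bool})--(\ref{eqn:gt0a}) are exactly elementary laws of probability that any such marginal system automatically satisfies. The lemma is therefore essentially a soundness statement: Nilsson's exact solution sits inside the feasible region of $L_\K$, and on it the relaxed objective vanishes.
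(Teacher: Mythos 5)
Your proposal is correct and follows essentially the same route as the paper's own (much terser) proof sketch: use consistency to obtain a satisfying distribution $\oF$, take Nilsson's marginals as the values of $\piP(z)$ and set $\piP(c_i)=p_i$, and observe that this point is feasible and makes the objective vanish. Your version simply makes explicit the verification that constraints (\ref{eqn:bool})--(\ref{eqn:gt0a}) are elementary probability laws satisfied by these marginals, which the paper leaves implicit.
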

\begin{proof}
(Sketch.) Eqn. (\ref{eqn:obj}) minimises to 0 only when $\piP(c_i)
= p_i$ for all $c_i$. If \K{} is consistent, then there is an
assignment of values to the literal probabilities that satisfies the
constraints, 
i.e., for all literals $z$ and all clauses $c$, $\piP(z)\!\!=
\!\!P(z)$ and $\piP(c)\!\! = \!\!P(c)$ minimise (\ref{eqn:obj}) to 0.
\end{proof}

\begin{corollary}
\label{corollary:linearSystem}

Given a KB \K, if $L_\K$ does not minimise to 0, then $\K$ is not
consistent.
\end{corollary}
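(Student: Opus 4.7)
The plan is to obtain this corollary as an immediate consequence of Lemma~\ref{lemma:linearSystem} by contraposition. Lemma~\ref{lemma:linearSystem} asserts that consistency of $\K$ implies the existence of a feasible assignment of the unknowns $\piP(z)$ and $\piP(c_i)$ at which the objective function (\ref{eqn:obj}) attains the value $0$. Since the minimum of $L_\K$ is the infimum over all feasible points, the existence of such an assignment forces that minimum to be $0$. Therefore, if the minimum of $L_\K$ is strictly positive, no such assignment can exist, which in turn rules out consistency of $\K$.

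Concretely, I would argue as follows. Suppose, for contradiction, that the minimum value of $L_\K$ is not $0$ and yet $\K$ is consistent. By Definition~\ref{dfn:lp} the objective is $\sum_{i=1}^{m}|\piP(c_i)-p_i|$, which is non-negative, so the assumption means the minimum is strictly positive. By Lemma~\ref{lemma:linearSystem}, consistency of $\K$ yields an assignment with $\piP(z) = P(z)$ for every literal and $\piP(c_i) = P(c_i) = p_i$ for every clause; this assignment satisfies all constraints (\ref{eqn:bool})--(\ref{eqn:gt0a}) and sends (\ref{eqn:obj}) to $0$, contradicting the supposed positivity of the minimum. Hence $\K$ must be inconsistent, which is the desired conclusion. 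There is no real obstacle here; the only subtlety worth stating explicitly is that the minimum of $L_\K$ is attained (the feasible region is a bounded polyhedron and the objective is a continuous piecewise-linear function), so ``does not minimise to $0$'' can unambiguously be read as the optimal value being strictly positive.
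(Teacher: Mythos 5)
Your argument is correct and is essentially the paper's own proof: the corollary is obtained as the contrapositive of Lemma~\ref{lemma:linearSystem}, since consistency would produce a feasible point with objective value $0$. Your added remark that the minimum is attained (bounded polyhedral feasible region, continuous piecewise-linear objective) is a worthwhile clarification of what ``does not minimise to $0$'' means, but it does not change the route.
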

\begin{proof}
(Sketch.) By the contrapositive of Lemma~\ref{lemma:linearSystem},
if there is no assignment that solves the linear
  programming problem, then there is no exact solution for $\oF$.
\end{proof}

\begin{proposition}
\label{prop:ployTime}

Given a KB $\K$ with $n$ propositional variables $x_1,\ldots, x_n$,
each $\piP(x_i)$ in $L_\K$ can be computed in polynomial time \wrt{}
$n$.
\end{proposition}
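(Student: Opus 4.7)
The plan is to reduce $L_\K$ to a standard linear program of polynomial size and then invoke a known polynomial-time LP algorithm (for instance Karmarkar's interior-point method or Khachiyan's ellipsoid method). The work splits into (a) linearising the absolute-value objective, (b) bounding the size of the resulting LP, and (c) arguing feasibility so that the optimum is actually attained and $\piP(x_i)$ can be read off.

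First I would eliminate the absolute values in Eqn.~(\ref{eqn:obj}). Introduce auxiliary non-negative variables $t_1,\ldots,t_m$, replace the objective by $\min \sum_{i=1}^{m} t_i$, and add the two linear constraints $t_i \geq \piP(c_i) - p_i$ and $t_i \geq p_i - \piP(c_i)$ for each $i$. The resulting program is an ordinary LP whose optimum in the $t_i$ variables equals $|\piP(c_i) - p_i|$, so it computes the same $\piP$ values as $L_\K$.

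Next I would count variables and constraints. The unknowns are one $\piP(c_i)$ per clause ($m$ of them), one $\piP(z)$ per literal (at most $2n$ of them), and the $m$ auxiliary $t_i$, giving $O(m+n)$ variables. The constraints of Eqns.~(\ref{eqn:bool})--(\ref{eqn:gt0a}) contribute at most $O(n)$ terms per clause (since a clause has at most $n+1$ literals), plus the $n$ equalities $\piP(z_j)+\piP(\neg z_j)=1$ and $O(n)$ box constraints, plus the $2m$ absolute-value constraints, for a total of $O(mn)$ constraints. Each coefficient is $0$, $1$, or a rational probability from the KB, so the bit-length is polynomial in the input. Hence the LP has size polynomial in $n$ (and $m$), and the standard polynomial-time LP algorithms return an optimal vertex of the feasible polytope in time polynomial in $n$.

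Finally I would verify that the LP is feasible, so the solver indeed returns values for every $\piP(x_i)$: setting $\piP(z)=1/2$ for every literal, $\piP(c_i)=1$ for every clause, and $t_i = |1-p_i|$ satisfies Eqns.~(\ref{eqn:bool})--(\ref{eqn:gt0a}) together with the linearised absolute-value constraints. Thus $L_\K$ always has a solution, and the value of $\piP(x_i)$ can be extracted directly from the output. The main subtlety I anticipate is purely bookkeeping: making precise that the absolute-value reformulation is an equivalent LP whose optimum is the same as (\ref{eqn:obj}); once that is in place, the rest is a direct appeal to polynomial-time linear programming.
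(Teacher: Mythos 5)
Your proposal takes essentially the same route as the paper's (two-line) proof sketch: count the unknowns, observe the program has polynomial size, and invoke polynomial-time solvability of linear programming. Your version is actually more complete, since the paper silently skips the absolute-value linearisation that is needed before Eqn.~(\ref{eqn:obj}) is literally a linear objective. One small slip: your feasibility witness $\piP(z)=1/2$, $\piP(c_i)=1$ violates Eqn.~(\ref{eqn:bool}) for a single-literal clause (e.g.\ the clause $\alpha$ in Example~1, where it would require $1 \leq 1/2$); setting $\piP(c_i)=1/2$ for every clause instead satisfies Eqns.~(\ref{eqn:bool})--(\ref{eqn:gt0a}) for any clause length and repairs the argument.
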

\begin{proof}
(Sketch.) Let $u$ be the number of unknowns in $L_\K$, $m$ the number
of clauses in $\K$, $u = 2n + m$. Linear Programming is polynomial
time solvable.
\end{proof}

It is theoretically interesting to ask, for
consistent KBs, what the error bound between literal probability
computed with Nilsson's method and our linear programming
method is, subject to a chosen linear programming solver. However, in the
context of this work, answering such question is less important as KBs
generated by our approach are not necessarily consistent. For such
KBs, Nilsson's approach gives no solution thus these is no ``error
bound'' exists.

With a means to reason with KBs, we are ready to answer queries.
Algorithm~\ref{alg:KB_to_query} defines the query process.
Let $L_\K$ be the linear system constructed from $\K$.
Given a query $\mc{Q}$ with feature-value pairs
$a_1\_v_1,$ $\ldots,a_n\_v_m$, we amend $L_\K$ by inserting
$\piP(a_i\_v_j) = 1$ and $\piP(a_i\_v_j') = 0$, where $v_j'$ is a
possible value of $a_i$, $v_j' \neq v_j$, for all $a_i, v_j$ in
$\mc{Q}$. $\piP(\pos)$ computed in $L_\K$ answers whether $\mc{Q}$ is
positive. Since the solution of $\piP(\pos)$ can be a range, we
compute the upper and lower bounds of $\piP(\pos)$ by maximising and
minimising $\piP(\pos)$ subject to minimising Eqn.(\ref{eqn:obj}), respectively, and use the average of the two. It returns {\em
positive} when the average is greater than 0.5.
The intuition of our approach is that,
for a query $x$, to evaluate whether $\K, x
\models \pos$, we compute $\piP(\pos)$ in $L_\K$, in which
$\K$ is treated ``defeasibly'' \stt{} the probabilities of a clauses
in $\K$ can be relaxed
whereas the query $x$ is treated ``strictly''
as constraints in $L_\K$. Example 6 illustrates the query process.
\begin{algorithm}
  \caption{Query Knowledge Base}\label{alg:KB_to_query}
\begin{small}
\begin{algorithmic}[1]
\Procedure{QueryKB}{\ti{query}, $L_\K$}
\For {each feature \textit{a} in \textit{query}}
\For {each possible value \textit{v} of \textit{a}}
\If{\textit{a} has value \textit{v} in \textit{query}}
\State Add \textit{$\piP(a\_v) = 1$} to $L_\K$
\Else
\State Add \textit{$\piP(a\_v) = 0$} to $L_\K$
\EndIf
\EndFor
\EndFor
\State \Return $\piP(\pos)$ computed in $L_{\K}$
\EndProcedure
\end{algorithmic}
\end{small}
\end{algorithm}

\begin{example}
\label{exp:query}

(Example~\ref{exp:psatDT} cont.) For query {\rm 0101},
we add the following equations as constraints to $L_\K$:\\
\centerline{
\setlength{\tabcolsep}{4pt}
\begin{tabular}{llll}
$\piP(a_1\_0) = 1$, & $\piP(a_1\_1) = 0$, &
$\piP(a_2\_0) = 0$, & $\piP(a_2\_1) = 1$, \\
$\piP(a_3\_0) = 1$, & $\piP(a_3\_1) = 0$, &
$\piP(a_4\_0) = 0$, & $\piP(a_4\_1) = 1$.
\end{tabular}
}
The computed $\piP(\pos)$ is no greater than 0.5, representing a
negative classification.
\end{example}

The proposed querying mechanism differs fundamentally from
that of decision trees. A decision tree query can
be viewed as finding the longest clause in the KB that matches with
the query in and checking whether its probability is greater
than 0.5. For instance, for query {\em 0101}, a decision tree
query returns positive as the longest matching clause in
``\pos{} $\vee \neg a_4\_1$'' has probability 1. However, our approach
considers probabilities from other clauses in the $\K$ and produces a
different answer.
%

Since KB constructed with Algorithm~\ref{alg:KB_direct} contains
far more clauses than Algorithm~\ref{alg:KB_from_tree},
to improve
query efficiency, for a given query $Q$, we can construct a KB that
only contains clauses {\em directly relevant} to $Q$, as shown in Example 7, and perform query
on this subset of clauses, as shown in
Algorithm~\ref{alg:direct_query}.\footnote{In
line~\ref{line:contains}, a clause containing a {\em key} is defined
syntactically, e.g., ``$\pos \vee \neg a_1\_0 \vee \neg a_2\_0$''
contains $\{\neg a_1\_0, \neg a_2\_0\}$.} Query performed on the
relevant KB gives the same result as in the full KB, $\K_\dirt$, as
irrelvant clauses give no additional constraint to $\piP(\pos)$.

\begin{algorithm}
  \caption{Construct Relevant Knowledge Base}\label{alg:direct_query}
\begin{small}
\begin{algorithmic}[1]
\Procedure{QueryRelevant}{$Q, \K_\dirt$}
\State $\ti{feaVals} \LA \{a\_v |$ feature $a$ has value $v$ in
$\ti{Q}\}$
\State $S \LA$ \Call{Powerset}{\ti{feaVals}} $\setminus \{\}$,
$\ti{relevantKB} \LA \{\}$
\For {each \ti{key} an element of $S$}
  \For {each \ti{clause} in $\K_\dirt$}
    \If {\ti{clause} contains \ti{key}} \label{line:contains}
    \State Insert \ti{clause} to \ti{relevantKB}
    \EndIf
  \EndFor
\EndFor
\State \Return \ti{relevantKB}
\EndProcedure
\end{algorithmic}
\end{small}
\end{algorithm}

\begin{example}
\label{exp:rel}

(Example~\ref{exp:query} cont.) {\em relevantKB} for query 0101 is
follows:

\hspace{-15pt}
\begin{small}
\begin{tabular}{llll}
  $[0.33]$ & \pos{} $\vee \neg a_1\_0$ &
  $[0.5]$  & \pos{} $\vee \neg a_2\_1$ \\
  $[0.5]$ & \pos{} $\vee \neg a_3\_0$ &
  $[1.0]$ & \pos{} $\vee \neg a_4\_1$ \\
  $[0.0]$ & \pos{} $\vee \neg a_1\_0 \vee \neg a_2\_1$ &
  $[0.5]$ & \pos{} $\vee \neg a_1\_0 \vee \neg a_3\_0$ \\
  $[0.5]$ & \pos{} $\vee \neg a_2\_1 \vee \neg a_3\_0$ &
  $[1.0]$ & \pos{} $\vee \neg a_2\_1 \vee \neg a_4\_1$ \\
  $[0.0]$ &
  \multicolumn{3}{l}
  {\pos{} $\vee \neg a_1\_0 \vee \neg a_2\_1 \vee \neg a_3\_0$}
\end{tabular}
\end{small}
\end{example}

Overall, our method is non-parametric so no tuning is required.
Query generalization is the result of restricting the
solution space of
$\piP(\pos)$ through clauses describing subsets of the query. In
Example~\ref{exp:rel}, {\em 0101} is not in the training
set. However, the relations between its substrings and $\pos$ are
described by clauses in the KB.
Jointly, these clauses decide $\piP(\pos)$, which approximates
$P(\pos)$ for this query.

\section{Explanation and Knowledge Incorporation}
\label{sec:expl}

Several methods for comparing feature importance as a form of
explanation have been introduced in the literature. Some of these
methods, e.g. \cite{Zhao19} and \cite{Apley16}, study the relation
between features and the overall
classification for all training cases. They are ``global'' methods in
the sense that they answer the question: {\em ``Which
feature has the strongest correlation with the class label in a
dataset?''} Whereas other methods, notably Shaply Value based
approaches \cite{Strumbelj14,Lundberg17,shapley}, study feature value
contribution for individual instances. They are ``local'' and answer:
{\em ``For a given query instance, how much contribution does each of
  its feature value make?''} In this sense, ours is a local approach
that explains query instances.

One advantage of the presented classification method is that it
supports {\em partial queries}, which are queries with missing
values, as the probability of $\pos$ can be computed without values
assigned to all features. Explanation computation can be supported
with partial queries in our approach.
%
%
Algorithm~\ref{alg:compExp} outlines one approach. Given a query $Q$
with $n$ features, to find the $k$ most decisive features, we
construct {\em sub-queries} \stt{} each sub-query contains exactly $k$
feature-value pairs in $Q$. If $Q$ yields a positive classification,
then the sub-query that maximises $\piP(\pos)$ is an explanation;
otherwise, the sub-query that minimises $\piP(\pos)$ is. Since we know
that there are $\tbinom{n}{k}$ different sub-queries in total, the
order of sub-query evaluation can be strategised with methods such as
hill climbing for more efficient calculation.
Although in principle, Algorithm~\ref{alg:compExp} could
work with any classification technique supporting partial
queries, our proposed method does not require reconstructing the
trained model for testing each of the sub-queries, making the
explanation generation convenient. The explanation approach is illustrated in Example~\ref{exp:expl}.

\begin{algorithm}[H]
  \caption{Explanation Computation}\label{alg:compExp}
\begin{small}
\begin{algorithmic}[1]
\Procedure{ComputeExplanation}{$Q, L_\K, k$}
\State $S \LA \{sQ|sQ \in 2^{\textit{Q}}, \Call{SizeOf}{sQ} = k\}$
\If{$\Call{QueryKB}{Q,L_\K} > 0.5$}
\State \Return $\argmax_{sQ \in S} \Call{QueryKB}{sQ,L_\K}$
\Else
\State \Return $\argmin_{sQ \in S} \Call{QueryKB}{sQ,L_\K}$
\EndIf
\EndProcedure
\end{algorithmic}
\end{small}
\end{algorithm}

\begin{example}
\label{exp:expl}

(Example~\ref{exp:query} cont.) To compute the single most decisive
feature, we let $k = 1$. $S$ contains four feature-value pairs:
%
$q_1 = \{a_1\_0\},$ 
$q_2 = \{a_2\_1\},$ 
$q_3 = \{a_3\_0\},$ 
$q_4 = \{a_4\_1\}.$
%
Let $\piP_i, i=1 \ldots 4$ be $\piP(\pos)$ computed with $q_1 \ldots
q_4$, \respectively.
We have $\piP_1 = 0.33, \piP_2 = 0.5$,
$\piP_3 = 0.5$, and $\piP_4 = 1$. Thus, the computed explanation for
the classification is $a_1\_0$. We read this as:
\centerline{\em 0 - - - is responsible for 0101 being negative.}
This matches with our intuition well as for each of the other
choices, there are at least as many positive samples as negative
ones.

\end{example}

Note that there is a subtle difference between our
approach and Shaply Value based methods. Upon computing a $k$-feature
explanation, our approach considers $\tbinom{n}{k}$ $k$-feature
coalitions and select the ``most decisive'' coalition. Wherease Shaply
Value approaches consider each feature individually and returns the
set of $k$ most decisive fetures.

Incorporating domain knowledge to complement data-driven
machine learning is supported by our approach. Since a KB
consists of probabilistic clauses, any knowledge $\K'$, about either a
specific query or the overall model, can be used alongside $\K$, as
long as it is represented in clausal form. In other words,
Equation~\ref{eqn:mainIdea} can be revised to
\begin{equation}
\mc{M}(D), \mc{K}', x \models \pos. \label{eqn:mainIdea2}
\end{equation}
Two advantages of our approaches are (1) incorporated
knowledge is used in the same way as clauses learned from data; and
(2) since the inference process tolerates inconsistency, {\em
  incomplete} or {\em imperfect} knowledge can be incorporated. For
instance, suppose we somehow know it is ``mostly true'' that {\em a
  string is positive if either its 3rd or 4th digit is 0}. If we
liberally take ``mostly true'' as, saying, probability 0.9, this can be
represented as $a_3\_0 \vee a_4\_0
\rightarrow \pos$, so we insert

\centerline{
\begin{tabular}{llllllll}
$[0.9]$ & $\pos \vee \neg a_3\_0$ &&&&& $[0.9]$ & $\pos \vee \neg a_4\_0$
\end{tabular}
}
\noindent
into $\K$ to complement clauses learned from data. Although similar
clauses or even the same clause with different probabilities may
already exist in the KB, our ability of tolerating inconsistencies
could accommondate such knowledge, as shown in the next section.

\section{Performance Analysis}
\label{sec:perf}

Definition~\ref{dfn:lp} gives an efficient system construction.
As shown in Figure~\ref{fig:prob},
we can solve KBs containing up to 10,000 variables and
10,000 clauses within a few seconds on a single CPU workstation with
an Xeon 2660v2 processor and 32GB RAM.
The ability of approaching
KBs of such large sizes enables solving practical classification
tasks.

\begin{figure}[!htb]
\centerline{
\includegraphics[trim={0.4cm 0cm 0.3cm 0.0cm},
                       clip,width=0.53\textwidth]{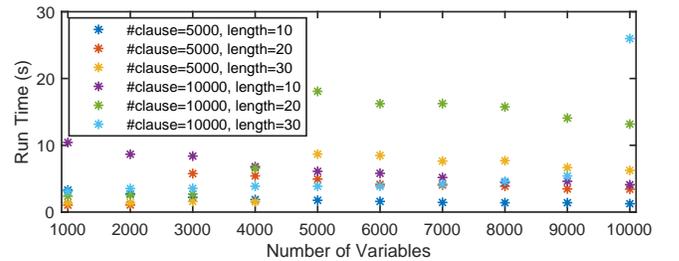}}
\vspace{-3pt}
\caption{Experiment results from KB with different
  sizes. \label{fig:prob}}
\end{figure}

To evaluate the proposed classifiers, we first conduct experiments on six real data sets, with results shown in Table~\ref{table:perfEvalReal}.
\begin{table}[!htb]
\caption{Experiment results (F$_1$ scores) with multiple data sets and
  several baseline algorithms. \label{table:perfEvalReal}}
\begin{scriptsize}
\centerline{
\begin{tabular}{|r|cccccc|}
\hline
       & Titanic & Mushroom & Nursery & HIV-1    & Bill & Vehicle \\
\hline
Tree    & 0.79     & {\bf0.99} & 0.99     & 0.87      & 0.98      & 0.95 \\
Direct & 0.79     & {\bf0.99} & 0.99     & 0.97      & {\bf0.99} & 0.96 \\
\hline
CART   & {\bf0.82}& {\bf0.99} & 0.99     & 0.94      & {\bf0.99} & {\bf0.98} \\
MLP    & 0.81     & {\bf0.99} & 0.99     & 0.73      & 0.98      & 0.96 \\
Forest & {\bf0.82}& {\bf0.99} & {\bf 1}  & 0.98      & {\bf0.99} & {\bf0.98} \\
SVM    & 0.78     & {\bf0.99} & 0.99     & {\bf0.99} & {\bf0.99} & 0.97 \\
\hline
\end{tabular}}
\end{scriptsize}
\vspace{-15pt}
\end{table}
For each data set, we measure the performance with the F$_1$ score,
taken as the average of 50 runs for each data set. Our
approaches are {\em Tree} (Algorithm 2) and {\em Direct} (Algorithm 3). We use CART (a decision
tree algorithm),  multi-layer perceptron (MLP) neural networks (with
two hidden layers with 12 and 10 nodes, \respectively), random forest
(with 100 trees) and support vector machine as our comparison
baselines. The six real data sets include the
Titanic \footnote{\url{https://www.kaggle.com/c/titanic}},
Mushroom, Nursery and HIV-1 protease cleavage data sets from the
UCI Machine Learning Repository \cite{Dua2017}, the UK parliament bill
data set reported in \cite{Cyras19} as well as an image data set for
vehicle classification. For the Titanic data set, we used seven
discrete
features -- ticket class, sex, age (discretized to 4 categories), number
of siblings, number of parents, passenger fare (discretized to 3
categories), and port of embarkation. For the Mushroom data set, we
used the first 11 features. For the multi-class data set Nursery, we
randomly selected two classes and discarded others. For the Parliament
bill, we used five features -- House of Commons or House of Lords, type
of bill, number of sponsors, bill subject, and final stages of the
bill. The vehicle image data set contains 1635 images with 767 of them
being cars and the rest busses and trucks. Feature extraction has been
applied with 12 features created for each image. They are: number of
pixels of the object, shape coefficient 1-5, mean and standard
deviation of RGB channels.  Each data set has been pre-processed such
that the positive and the negative samples are balanced by randomly
replicating samples in the smaller class. For all data set, the ratio
between training and testing is 70\% to 30\%. Overall, we see that
{\em Direct} gives satisfactory performance.

To evaluate our explanation approach, we first compare {\em Direct}
with the state of the art Shapley Value based approach SHAP
\cite{shapley},
using the Titantic and Mushroom data sets. The results are shown in
Figure~\ref{fig:comp1}, with Figure~\ref{fig:comp1}(a)(b) showing the results from the Titantic data set and Figure~\ref{fig:comp1}(c)(d) from the Mushroom.
Figure~\ref{fig:comp1}(a) shows the percentage of the same features
suggested as explanations for different explanation lengths (i.e., $k=1,2,3,4,5$). For example,
when $k=1$ (computing one-feature explanations), 75\% of
all instances have the same feature chosen as
the explanation by both approaches. When $k=2$ (computing two-feature
explanations),
there are 72\% and 25\% instances found with the same 1 and 2
features, respectively. Figure~\ref{fig:comp1}(b) shows the percentage
of each feature being selected as an explanation across all
instances. We see that when $k=1$, ours and SHAP both suggest that
feature 2
explains the classification result for over 70\%
instances. When $k=2$, the two approaches agree that feature 2 is an
explanation while differing on the choice for the other feature.
\begin{figure*}[!h]
\centering
\begin{tabular}{@{}c@{}}
\includegraphics[trim={3cm 1cm 3cm 1cm},
                       clip,width=0.95\textwidth]{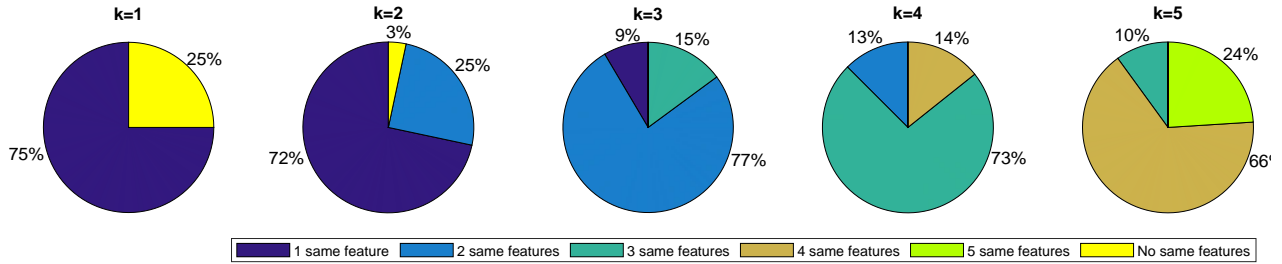}\\
(a) The percentages of the same explanations suggested by Direct and Shapley over the Titantic data\\
\includegraphics[trim={3cm 1.5cm 3cm 0cm},
                       clip,width=0.95\textwidth]{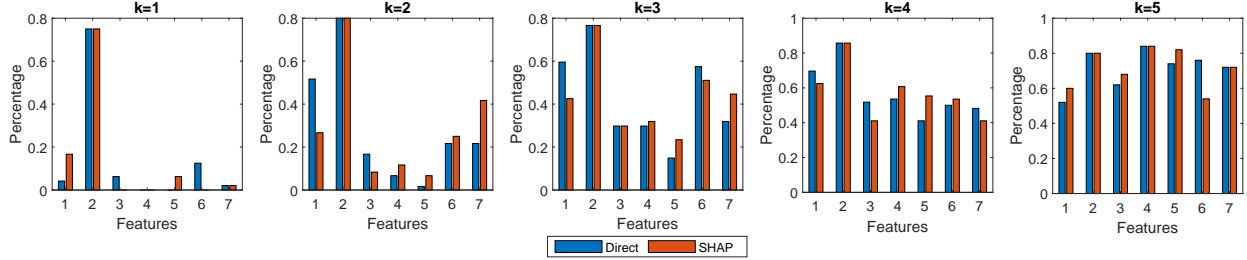}\\
(b) The percentages of features serving as explanations suggested by Direct and Shapley over the Titantic data set\\
\includegraphics[trim={3cm 1cm 3cm 1cm},
                       clip,width=0.95\textwidth]{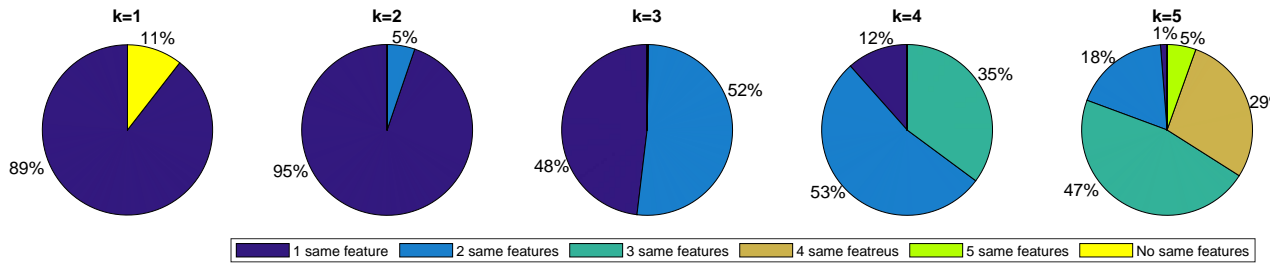}\\
(c) The percentages of the same explanations suggested by Direct and Shapley over the Mushroom data set\\
\includegraphics[trim={3cm 1.5cm 3cm 0cm},
                       clip,width=0.95\textwidth]{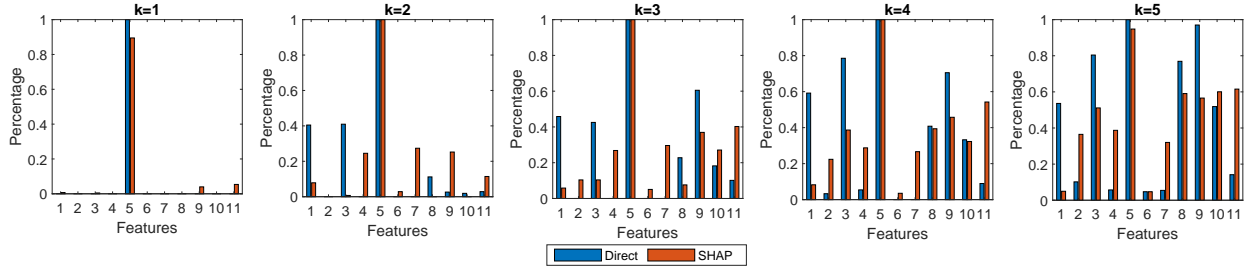}\\
(d) The percentages of the same explanations suggested by Direct and Shapley over the Mushroom data set\\
\end{tabular}
\caption{Explanation results comparison.} \label{fig:comp1}
\vspace{-10pt}
\end{figure*}

Results presented in Figure~\ref{fig:comp1} shows that our approach
gives similar results to SHAP. As there is no
explanation ground
truth in these data sets, it is impossible to decide who gives
``correct'' explanations. To address this, we performed further
experiments with synthetic data sets with known explanation ground
truth. Specifically, we created four synthetic data sets of integer
strings, Syn 10/4, Syn 10/8, Syn 12/4, and Syn 12/8, with the
following rules. For each data set, we set a (random) {\em seed}
string of the same length as strings in the data set from the same
alphabet. For instance, for the ``Syn 10/4'' data set with 10 bits
strings where each bit can take 4 possible values, 3232411132 is the
seed. (Here, the size of the alphabet is 4. Each 10-bit string denotes
a data instance with 10 features \stt{} each feature takes its value
from \{1,2,3,4\}.) A string $s$ in the data set is labelled positive
\ifaf{} $s$
match bits in the seed for exactly five places. E.g.,
\UL{3}1\UL{3}3\UL{4}2\UL{1}24\UL{2}\footnote{The underlined bits are
  identical to the seed.} is positive and
\UL{3}1\UL{3}3\UL{4}2\UL{1}2\UL{3}\UL{2} is negative (it shares 6 bits
as the seed rather than 5).
For each string classified as positive, we compute a $k$-bit
explanation. An explanation is {\em correct} \ifaf{} the seed string
has the same values for the bits identified as the explanation. The
accuracy of an explanation is defined as the number of correct bits
over the length of explanation. For instance, for $k=5$, we have\\
\centerline{
\begin{tabular}{cccc}
Query & Explanation & Seed & Accuracy \\
\hline
3233112143 & 323--1-1-- & 3232411132 & 1.0 \\
3244341112 & -2---411-2 & 3232411132 & 0.8 \\
\end{tabular}
}
The 2nd query contains an incorrect explanation 4. On
our synthetic data sets with a 70\% to 30\% split on training
and testing, the classification result is shown in Table~\ref{table:class2}
and the explanation accuracy for the \emph{Direct} and SHAP approaches is
shown in Table~\ref{table:exp}. This is an informative experiment as:
(1) there is no ``useless'' feature in the data set as every feature (bit)
could be decisive thus functions as part of an explanation as long as
its value is the same as the feature in the seed; (2) the seed is the
known ground truth for explanation comparison; (3) moreover, as shown
in Table~\ref{table:class2}, these datasets represent non-trivial
classification problems.

\begin{table}[!h]
\caption{Experiment results (F$_1$ scores) with synthetic data sets and
  several baseline algorithms. \label{table:class2}}
\begin{scriptsize}
\centerline{
\begin{tabular}{|r|cccc|}
\hline
       & Syn 10/4  & Syn 10/8 & Syn 12/4 & Syn 12/8 \\
\hline
Tree   & 0.71      & 0.78     & 0.62      & 0.70      \\
Direct & {\bf0.92} & 0.95     & {\bf0.89} & {\bf0.94} \\
\hline
CART   & 0.79      & 0.87     & 0.70      & 0.84       \\
MLP    & 0.77      & 0.83     & 0.73      & 0.80      \\
Forest & 0.90      & {\bf0.96}& 0.85      & 0.93      \\
SVM    & 0.85      & 0.86     & 0.81      & 0.81      \\
\hline
\end{tabular}
}
\end{scriptsize}
\end{table}


\begin{table}[!h]
\caption{Explanation accuracy on four syntactic data sets and
  various explanation lengths $k$.\label{table:exp}}
\begin{scriptsize}
\centerline{
\begin{tabular}{|c|c|c|c|c|c|c|}
    \hline
    \multicolumn{2}{|c|}{} & $k=1$ & $k=2$ & $k=3$ & $k=4$ & $k=5$\\
    \hline
    \multirow{2}{*}{10/4}&Direct&1&1&1&0.995&0.972\\
    \cline{2-7}
    &SHAP&1&1&0.996&0.993&0/962\\
    \hline
    \multirow{2}{*}{10/8}&Direct&1&1&0.997&0.980&0.976\\
    \cline{2-7}
    &SHAP&0.996&0.995&0.972&0.967&0.951\\
     \hline
    \multirow{2}{*}{12/4}&Direct&1&0.982&1&0.997&0.901\\
    \cline{2-7}
    &SHAP&0.993&0.980&0.973&0.942&0.856\\
     \hline
    \multirow{2}{*}{12/4}&Direct&1&1&0.998&0.975&0.964\\
    \cline{2-7}
    &SHAP&1&0.990&0.977&0.929&0.918\\
    \hline
\end{tabular}}
\end{scriptsize}
\label{tab:multicol}
\end{table}


Table~\ref{table:class2} shows that, similar to
Table~\ref{table:perfEvalReal}, the classification accuracy of
our approach is competitive comparing to the baseline approaches. This
further validates our approach for
classification. Table~\ref{table:exp} shows that although our
approach (Direct) and SHAP both can identify part of the seed string
from each query instances, hence computing correct explanations,
ours gives higher accuracy across the board.

To demonstrate the effect of knowledge incorporation, we gradually add
clauses drawn from sub-strings derived from the seed to the KB. The
result is shown in Figure~\ref{fig:KI}(a). Tested on the data set with
string length 10, size of alphabet 4 with the {\em Tree} algorithm,
we see that the classification performance improves as the number of
true clauses inserted grows.
To show that the knowledge incorporation is resilient to pollution, we
insert clauses of a random length between 1 and 10 with a random
probability to pollute the KB. As shown in Figure~\ref{fig:KI}(b),
for the same data set, the classification performance deteriorates
gradually as the number of random clauses grows.

\begin{figure}
\centerline{
\includegraphics[trim={0cm 2.1cm 0cm 0cm},
                       clip,width=9.8cm]{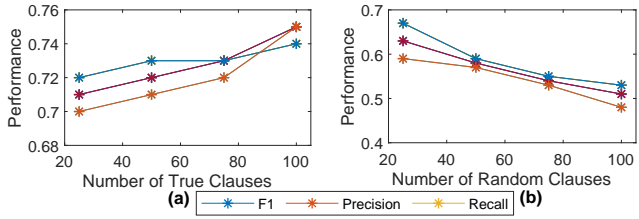}}
\caption{The plot on the left / right side shows
classification results from KBs with true / random clauses
inserted. \label{fig:KI}}
\vspace{-10pt}
\end{figure}


\section{Related Work}
\label{sec:relatedWork}

Probabilistic logic programming, or ProbLog, \cite{Fierens15} provides a
means to do logic programming with probabilities. Our work differs
from ProbLog in several ways. (1) ProbLog develops Logic Programming
and uses grounded predicates with closed world assumption to allow
negations whereas we use propositional clauses with classical negations;
(2) ProbLog uses Sato's distribution semantics and assumes all
atomic variables, the variables not derived with Logic Programming,
being independent whereas we use Nilsson's probabilistic logic
semantics and make no independence assumption. (3) ProbLog performs
inference with weighted model counting, which is then solved with
MAX-SAT, an NP-hard problem, whereas we use linear programming, which
is polynomial.

Performing probabilistic logic inference with mathematical programming
has been studied recently in \cite{Henderson19-aij} with its NonlInear
Probabilistic Logic Solver (NILS) approach. Although in both works
clauses with associated probabilities are turned into systems of
equations, the two approaches differ significantly.
NILS either assumes independence amongst its variables
or expand probability of
conjunctions as the product of the probability of a literal and some
conditionals.
Thus NILS produces
non-linear systems
and rely on gradient descent methods
for finding solutions. Consequently, NILS is unsuitable for
classification
as the independence assumption does not hold between the class labels
and feature values or, in general, values across different
features. When independence cannot be assumed, systems constructed
with NILS contains $k$th order equations with $2^k-1$ unknowns for each
$k$ literal clauses. Such high order equations with high number
unknowns are
difficult to solve numerically.
Comparing with NILS, the construction given in
Definition~\ref{dfn:lp} ``hides'' the complexity introduced by
conditionals in NILS with inequalities and ensures polynomial
complexity. Moreover, NILS does not tolerate inconsistency whereas
our approach does.

More generally, developing intelligent system based on reasoning with
KB has been explored in the past, see e.g.,
\cite{McCarthy68,Nilsson91}. Some of the early works on learning KB
from data use classical logic, e.g., \cite{Khardon94} or default logic
\cite{Roth96}. A comprehensive review on combining logic and
probability is beyond the scope of this section. For broader
discussions on this topic, see e.g.,
\cite{Bacchus90} for probabilistic logic, \cite{Chavira08} for weighted
model counting, and \cite{Gogate16} for probabilistic graphical
models with logical structures. The problem of testing a KB's
consistency is known as the probabilistic satisfiability (PSAT) problem. Works
dedicated to solving PSAT
include \cite{Cozman2015,Finger11,Georgakopoulos1988}. Since most of
these compute exact solutions over consistent KBs by solving an NP
problem, they are not suitable for classification.

In explainable machine learning, there has been
significant interest in providing explanations for classifiers; see
e.g., \cite{Biran17} for an overview. Works have been proposed to use
simpler thus weaker classifiers to explain results from stronger
ones, e.g., \cite{Feraud02}. Recent works on {\em
model-agnostic} explainers \cite{Ribeiro16,Ribeiro18} focus on adding
explanations to existing (black-box) classifiers. \cite{Alonso18} use
KB based classifiers to explain results obtained from MLP and random
forests. LIME \cite{Ribeiro16} augment the data with
randomly generated samples close to the instance to be explained and
then construct a simple thus explainable classifier to generate
explanations. \cite{Robnik08} works by decomposing a
model's predictions based on individual contributions of each feature.
\cite{Shih18} explains Bayesian network classifiers by compiling naive
Bayes and latent-tree classifiers into Ordered Decision Diagrams. \cite{shapley} provides explanations for decision trees based on the game-theoretic Shapley values.

\cite{Berrar2019}, \cite{Sachan18} and \cite{Vo17}
are some recent work on incorporating knowledge into machine
learning. \cite{Yu07} contains a survey,
categorising methods into four groups based on use of knowledge: (1)
to prepare training samples, (2) to initialise the hypothesis or
hypothesis space, (3) to alter the search objective and (4) to augment
the search process. Our approach fundamentally differs from those as
we represent knowledge in the same format as the model learned from
data and reason with both uniformly.

\section{Conclusion}
\label{sec:conclusion}

We present a non-parametric classification technique that gives
explanations to its predictions and supports knowledge
incorporation. Our approach is based on approximating literal
probabilities in probabilistic logic by solving linear systems
corresponding to KBs, which are either directly learned from data or
augmented with additional knowledge. Our linear program construction
is efficient and our approaches tolerate inconsistency in a KB.
As a stand-alone classifier, our approach matches or exceeds the
performance of existing algorithms on both synthetic and non-synthetic
data sets. At the same time, our approaches generate explanations in
the form of ``most decisive'' features. Upon comparing with a state of
the art Shapley Value based explanation method, SHAP, our approach
finds similar explanation as SHAP on real data sets. On four synthetic
data sets with known explanation ground truth, our approach is shown
to be superior as it achieves higher accuracy. Overall, we envisage
our approaches to be most useful for classification tasks where there
exists knowledge to complement data and explanations are required to
ensure usability.

There are four research directions that we plan to
explore. Firstly, this work focuses on developing the underlying
explainable classification techniques.
We will apply techniques developed
practical applications and perform user studies in the future.
Secondly, we will study
semantics for inconsistent KBs.
Thirdly,
we
will study richer explanation generation with with (probabilistic)
logic inference.
Lastly, we would like to develop other suitable
representations for knowledge incorporation.

\bibliographystyle{named}
\bibliography{bibliography}

\end{document}